\let\llncssubparagraph\subparagraph
\let\subparagraph\paragraph
\let\subparagraph\llncssubparagraph
\title{SAT-Based Rigorous Explanations \\for Decision Lists
\thanks{
This work is supported by the AI Interdisciplinary Institute ANITI,
funded by the French program ``Investing for the Future - PIA3'' under
Grant agreement n$^{o}$ ANR-19-PI3A-0004.
}}
\author{
  Alexey Ignatiev\inst{1} \and
  Joao Marques-Silva\inst{2}
}
\authorrunning{A. Ignatiev and J. Marques-Silva}
\institute{
  Monash University, Melbourne, Australia \\
  \href{mailto:alexey.ignatiev@monash.edu}{\texttt{alexey.ignatiev@monash.edu}} \and
  IRIT, CNRS, Toulouse, France \\
  \href{mailto:joao.marques-silva@irit.fr}{\texttt{joao.marques-silva@irit.fr}}
}
\newcommand{\todoF}[2]{}
\newcommand{\fml}[1]{{\mathcal{#1}}}
\newcommand{\mfrk}[1]{{\mathfrak{#1}}}
\newcommand{\tn}[1]{\textnormal{#1}}
\newcommand{\mbf}[1]{\ensuremath\mathbf{#1}}
\newcommand{\mbb}[1]{\ensuremath\mathbb{#1}}
\newcommand{\ddp}{\tn{D}^{\tn{P}}}
\definecolor{gray}{rgb}{.4,.4,.4}
\definecolor{midgrey}{rgb}{0.5,0.5,0.5}
\definecolor{middarkgrey}{rgb}{0.35,0.35,0.35}
\definecolor{darkgrey}{rgb}{0.3,0.3,0.3}
\definecolor{darkred}{rgb}{0.7,0.1,0.1}
\definecolor{midblue}{rgb}{0.2,0.2,0.7}
\definecolor{darkblue}{rgb}{0.1,0.1,0.5}
\definecolor{defseagreen}{cmyk}{0.69,0,0.50,0}
\DeclareMathOperator*{\nentails}{\nvDash}
\DeclareMathOperator*{\entails}{\vDash}
\DeclareMathOperator*{\limply}{\rightarrow}
\newcommand{\mailtodomain}[1]{\href{mailto:#1@ciencias.ulisboa.pt}{\texttt{\nolinkurl{#1}}}}
\titleformat{\paragraph}[runin]
            {\normalfont\normalsize\bfseries}{\theparagraph}{0.4em}{}
\newcommand{\frmeq}[1]{\begin{empheq}[box={\fboxsep=1.5pt\doublebox}]{flalign*}#1\end{empheq}}
\begin{document}

\maketitle

%
\begin{abstract}
  Decision lists (DLs) find a wide range of uses for classification
  problems in Machine Learning (ML), being implemented in a number of
  ML frameworks.
  DLs are often perceived as interpretable. However, building on
  recent results for decision trees (DTs), we argue that
  interpretability is an elusive goal for some DLs. As a result, for
  some uses of DLs, it will be important to compute (rigorous)
  explanations. 
  Unfortunately, and in clear contrast with the case of DTs, this
  paper shows that computing explanations for DLs is computationally
  hard. Motivated by this result, the paper proposes propositional
  encodings for computing abductive explanations (AXps) and
  contrastive explanations (CXps) of DLs. Furthermore, the paper
  investigates the practical efficiency of a MARCO-like approach for
  enumerating explanations.
  The experimental results demonstrate that, for DLs used in practical
  settings, the use of SAT oracles offers a very efficient solution,
  and that complete enumeration of explanations is most often
  feasible.
\end{abstract}
%


\section{Introduction} \label{sec:intro}

Decision lists (DLs)~\cite{rivest-ml87} find a wide range of uses for
classification problems in Machine Learning
(ML)~\cite{clark-ml89,cohen-ijcai93,cohen-icml95,cohen-aaai99,rudin-aistats15,rudin-jmlr17,rudin-kdd17,rudin-icml17,cohen-nips17,rudin-mpc18,rudin-aistats18},
being implemented in a number of ML frameworks
(e.g.~\cite{witten-jmlr10,demsar-jmlr13}).
DLs can be viewed as ordered rules, and so are often perceived as
interpretable\footnote{%
  Interpretability is a subjective concept, for which no rigorous
  accepted definition exists~\cite{lipton-cacm18}.
  As clarified later in the paper, for a given pair ML model and
  instance, we equate interpretability with how succinct is the
  justification for the model's prediction.}.
This explains in part the recent interest in
DLs~\cite{rudin-aistats15,rudin-jmlr17,rudin-kdd17,rudin-icml17,cohen-nips17,rudin-mpc18,rudin-aistats18},
most of which is premised on the interpretability of DLs.
However, building on recent results for decision trees
(DTs)~\cite{iims-corr20}, which demonstrate the possible
non-interpretability of DTs when representing specific functions, we
show that interpretability can also be an elusive goal for some DLs.
As a result, and for some concrete applications of DLs, it is
important to compute (rigorous) explanations.
%

Explanations can be broadly categorized into
heuristic~\cite{guestrin-kdd16b,lundberg-nips17,guestrin-aaai18} and
non-heuristic \cite{darwiche-ijcai18,inms-aaai19}. Recent work has
provided extensive evidence regarding the lack of quality of heuristic
explanation approaches~\cite{nsmims-sat19,inms-corr19,lukasiewicz-corr19,ignatiev-ijcai20,lakkaraju-aies20a,lakkaraju-aies20b}.
Non-heuristic (or rigorous) approaches for computing explanations can
be organized into
abductive
(AXp)~\cite{darwiche-ijcai18,inms-aaai19,inms-nips19,darwiche-ecai20}
and contrastive (CXp)~\cite{miller-aij19,inams-aiia20}. (Abductive
explanations are also referred to as
PI-explanations~\cite{darwiche-ijcai18} (i.e.\ prime implicant
explanations), since these represent subset-minimal sets of feature
value pairs that are sufficient for a prediction.)
Most work on rigorous explanations either exploits knowledge
compilation
approaches~\cite{darwiche-ijcai18,darwiche-aaai19,darwiche-ecai20,marquis-kr20},
or approaches based on iterative calls to some oracle for NP
(e.g.\ SAT, SMT, MILP, etc.)~\cite{inms-aaai19,inms-nips19,inams-aiia20}.
As a result, improvements to automated reasoning tools, can have a
profound impact on the deployment of rigorous explanation
approaches.

Furthermore, recent work proposed polynomial time algorithms for
finding explanations of a number of ML models, including
DTs~\cite{iims-corr20}, naive-Bayes classifiers~\cite{msgcin-nips20},
and also different knowledge representation
languages~\cite{marquis-kr20}.
Unfortunately, and in contrast with these recent tractability results,
this paper proves that finding one PI-explanation for a DL is NP-hard.

Motivated by the NP-hardness of finding explanations of DLs, the paper
proposes propositional encodings for computing abductive and
contrastive explanations of DLs. Furthermore, the paper investigates
the practical efficiency of a MARCO-like~\cite{lpmms-cj16} approach
for enumerating explanations.
The experimental results demonstrate that, for DLs used in practical
settings, the use of SAT oracles offers a very efficient solution, and
that complete enumeration of explanations is most often feasible.

The paper is organized as follows.
The notation and definitions used throughout the paper are introduced
in~\autoref{sec:prelim}.
\autoref{sec:approach} proves the NP-hardness of finding
rigorous explanations for DLs. In addition, this section develops a
propositional encoding for finding one AXp or one CXp, and briefly
overviews the online enumeration of explanations.
\autoref{sec:res} presents the experimental results.
The paper concludes in~\autoref{sec:conc}.

%


\section{Preliminaries} \label{sec:prelim}

\subsection{Propositional Satisfiability} \label{sec:prelim-sat}
Definitions standard in propositional satisfiability (SAT) and maximum
satisfiability (MaxSAT) solving are assumed~\cite{sat-handbook}.
In what follows, we will assume formulas to be propositional.
A conjunction of literals is referred to as \emph{term} while a
disjunction of literals is referred to as \emph{clause};
also note that a literal is either a Boolean variable or its negation.
Whenever convenient, terms and clauses are treated as sets of
literals.
A formula is said to be in \emph{conjunctive} or \emph{disjunctive}
\emph{normal form} (CNF or DNF, respectively) if it is a conjunction
of clauses or disjunction of terms, respectively.
Set theory notation will be also used with respect to CNF and DNF
formulas when necessary.

A \emph{truth assignment} $\mu$ is a mapping from the set of variables
to $\{0, 1\}$.
An assignment is said to satisfy a literal $l$ ($\neg{l}$, resp.) if
it maps variable $l$ to 1 (to 0, resp.).
A clause is said to be satisfied by assignment $\mu$ if $\mu$
satisfies at least one of its literals.
If for a CNF formula $\phi$ there exists an assignment $\mu$ that
satisfies all clauses of $\phi$, formula $\phi$ is referred to as
satisfiable and $\mu$ is its \emph{satisfying assignment} (or
\emph{model}).
In addition, we use the notation $\entails$ to denote
\emph{entailment}, i.e.\ $\phi_1\entails\phi_2$ if any model of
$\phi_1$ is also a model of $\phi_2$.

One of the central concepts in rigorous explainable AI
(XAI)~\cite{darwiche-ijcai18,inms-aaai19} is of prime implicants as
defined below.
\begin{definition}\label{def:prime}
  A term $\pi$ is an \emph{implicant} of formula $\phi$ if
  $\pi\entails\phi$.
  An implicant $\pi$ of $\phi$ is called \emph{prime} if none of the
  proper subsets $\pi'\subsetneq\pi$ is an implicant of~$\phi$.
\end{definition}

In the context of unsatisfiable formulas, the maximum satisfiability
(MaxSAT) problem is to find a truth assignment that maximizes the
number of satisfied clauses.
A number of variants of MaxSAT
exist~\cite[Chapters~23~and~24]{sat-handbook}.
Hereinafter, we are mostly interested in Partial (Unweighted) MaxSAT,
which can be formulated as follows.
The formula $\phi$ is represented as a conjunction of \emph{hard}
clauses $\fml{H}$, which must be satisfied, and \emph{soft} clauses
$\fml{S}$, which represent a preference to satisfy those clauses,
i.e.\ $\phi=\fml{H}\land\fml{S}$.
%
%
Therefore, the Partial MaxSAT problem consists in finding an
assignment that satisfies all the hard clauses and maximizes the total
number of satisfied soft clauses.
In the following, the concepts of minimal unsatisfiable subsets
(MUSes) and minimal correction subsets (MCSes) taking into account the
hard clauses $\fml{H}$ will also be helpful.
Concretely, consider unsatisfiable CNF formula
$\phi=\fml{H}\land\fml{S}$ with $\fml{H}$ and $\fml{S}$ defined as the
set of hard and soft clauses, respectively.
\begin{definition} \label{def:mus}
  A subset of soft clauses $\fml{M}\subseteq\fml{S}$ is a
  \emph{Minimal Unsatisfiable Subset} (MUS) iff $\fml{H}\cup\fml{M}$
  is unsatisfiable and
  $\forall_{\fml{M}'\subsetneq\fml{M}},\,\fml{H}\cup\fml{M}'$ is
  satisfiable.
\end{definition}

\begin{definition} \label{def:mcs}
  A subset of soft clauses $\fml{C}\subseteq\fml{S}$ is a {\em Minimal
  Correction Subset} (MCS) iff $\fml{H}\cup\fml{F}\setminus\fml{C}$ is
  satisfiable and
  $\forall_{\fml{C}'\subsetneq\fml{C}},\,\fml{H}\cup\fml{F}\setminus\fml{C}'$
  is unsatisfiable.
\end{definition}

MUSes and MCSes of a CNF formula are known to be related through the
minimal hitting set (MHS)
duality~\cite{reiter-aij87,lozinskii-jetai03,stuckey-padl05,liffiton-jar08},
which has been recently exploited in a number of practical
settings~\cite{liffiton-jar08,bacchus-cp11,iplms-cp15,imms-ecai16,imwms-ijcai19}
including XAI~\cite{inams-aiia20}.

\subsection{Classification Problems, Decision Lists, and Explanations}
\label{sec:mldef}

This section introduces definitions and notation related with
classification problems in ML, but also formal definitions of
explanations proposed in recent
work~\cite{darwiche-ijcai18,inms-aaai19}.

\paragraph{Classification problems.}
%
%
%
We consider a classification problem, characterized by a set of
(categorical) features $\fml{F}=\{1,\ldots,m\}$, and by a set of
classes $\fml{K}=\{c_1,\ldots,c_K\}$.
Each feature $j\in\fml{F}$ is characterized by a domain $D_i$.  As a
result, feature space is defined as
$\mbb{F}={D_1}\times{D_2}\times\ldots\times{D_m}$. A specific point in
feature space is represented by $\mbf{v}=(v_1,\ldots,v_m)$. A point
$\mbf{v}$ in feature space denotes an \emph{instance} (or an
\emph{example}). Moreover, we use $\mbf{x}=(x_1,\ldots,x_m)$ to denote
an arbitrary point in feature space. In general, when referring to the
value of a feature $j\in\fml{F}$, we will use a variable $x_j$, with
$x_j$ taking values from $D_j$. (To keep the notation simple, we opt
not to introduce an assignment function, mapping each feature $j$ to
some value in $D_j$.)
For simplicity, throughout this paper we will restrict $\fml{K}$ to
two classes, i.e.\ $\fml{K}=\{\oplus,\ominus\}$. However, most of the
ideas described in this document also apply in the more general case
of $\fml{K}$ with more than two elements; the general case of
non-binary classification is also considered in the experimental
results presented in~\autoref{sec:res}.
(In settings where $\fml{K}=\{\oplus,\ominus\}$, we will also equate
$\oplus$ with 1, and $\ominus$ with 0.)

A classifier implements a \emph{total classification function}
$\tau:\mbb{F}\to\fml{K}$.
%
%
In some settings, e.g.\ when computing explanations, it will be
convenient to represent the classification function as a
\emph{decision predicate} $\tau_c:\mbb{F}\to\{0,1\}$, parametrized by
some fixed class $c\in\fml{K}$, and such that
$\forall(\mbf{x}\in\mbb{F}).\tau_c(\mbf{x})\leftrightarrow(\tau(\mbf{x})=c)$.

\begin{figure}[t]
  \begin{subfigure}[b]{0.3575\textwidth}
    \small
    \begin{minipage}{\textwidth}
      \centering
      $\tau(\mbf{x})=\left\{
      \begin{array}{lcl}
        \oplus & \quad & \tn{if~~}[2x_1-x_2>1]
        \\[8pt]
        \ominus & \quad & \tn{if~~}[2x_1-x_2\le1]
        \\
      \end{array}
      \right.
      $
      \vspace{11pt}
    \end{minipage}
    \caption{Example linear classifier} \label{fig:lc01}
  \end{subfigure}
  \begin{subfigure}[b]{0.314\textwidth}
    \begin{center}
      \scalebox{0.825}{

        \begin{minipage}{1.18\textwidth}
        \frmeq{
          %
          %
          \begin{array}{lllcl}
            \tn{R$_{0}$:} & \tn{IF}      & x_1 & \tn{THEN} & \oplus \\
            \tn{R$_{1}$:} & \tn{ELSE IF} & x_2 & \tn{THEN} & \oplus \\
            \tn{R$_{\tn{\sc{def}}}$:} & \tn{ELSE} & \quad\quad & \tn{THEN} & \ominus \\
          \end{array}
        }
        \end{minipage}
      }
    \end{center}
    \caption{Example decision list} \label{fig:dl01}
  \end{subfigure}
  \begin{subfigure}[b]{0.314\textwidth}
    \begin{center}
      \scalebox{0.825}{
        \begin{minipage}{1.16\textwidth}
        \frmeq{
          %
          %
          \begin{array}{lllcl}
            \tn{R$_{0}$:} & \tn{IF}      & x_1 & \tn{THEN} & \oplus \\
            \tn{R$_{1}$:} & \tn{IF}      & x_2 & \tn{THEN} & \oplus \\
            \tn{R$_{2}$:} & \tn{IF} & \neg{x_1}\land\neg{x_2} & \tn{THEN} & \ominus \\
          \end{array}
        }
        \end{minipage}
      }
    \end{center}
    \caption{Example decision set} \label{fig:ds01}
  \end{subfigure}
  \caption{Example classifiers}
\end{figure}

\begin{example} \label{ex:lc01}
  To illustrate the definitions above, we consider a very simple
  linear classifier, defined as follows.
  Let $\fml{F}=\{1,2\}$, with $D_1=D_2=\{0,1,2\}$, and let
  $\fml{K}=\{\ominus,\oplus\}$.
  As a result, feature space is given by
  $\mbb{F}=\{0,1,2\}\times\{0,1,2\}$. Furthermore, the classification
  function associated with the classifier is shown
  in~\autoref{fig:lc01}. Concretely, the prediction is $\oplus$ if
  $2x_1-x_2>1$, and it is $\ominus$ otherwise.
  %
  \qed
\end{example}

%

\paragraph{Decision lists (DLs) \& decision sets (DSs).}
\label{par:dlds}
A \emph{rule} is of the form
``IF~\textsf{antecedent}~THEN~\textsf{prediction}'', where the
\textsf{antecedent} is of the form
$\bigwedge$\textsf{feature-literals}.
The interpretation of a rule is that if the \textsf{antecedent} is
consistent (i.e.\ all the literals are true), then the rule
\emph{fires} and the prediction is the one associated with the rule.
A decision list (DL)~\cite{rivest-ml87} is an \emph{ordered} list of
rules, whereas a decision set (DS)~\cite{clark-ewsl91,leskovec-kdd16}
is an \emph{unordered} list of rules.
%

Throughout the paper, we will consider ordered
sets of rule indices $\mfrk{R}=\{1,\ldots,R\}$, such that for
$i\in\mfrk{R}$, we will use $\mfrk{c}$, $\mfrk{l}$ and $\mfrk{o}$ to
denote, respectively, the class associated with rule $i$, the set of
literals associated with rule $i$ and
the order of rule $i$.

\begin{example} \label{ex:dl01}
  Consider another classifier.
  Let $\fml{F}=\{1,2\}$, with $D_1=D_2=\{0,1\}$, and so
  $\mbb{F}=\{0,1\}\times\{0,1\}$.
  The decision list for the classifier is shown in~\autoref{fig:dl01}
  while an equivalent decision set is shown in~\autoref{fig:ds01}.
  The classification function for the DL can be represented as
  follows:
  \[
  \tau(\mbf{x}) = \left\{
  \begin{array}{lcl}
    \oplus  & \quad &
    \tn{if~~}[(x_1)\lor(\neg{x_1}\land{x_2})]\\[10pt]
    \ominus & \quad & \tn{if~~}[(\neg{x_1}\land\neg{x_2})]\\
  \end{array}
  \right.
  \]
  (Note how the lack of order in DS rules results in a simpler
  classifier representation $\tau(\mbf{x})$ for class $\oplus$, e.g.\
  it can be explicitly represented as $x_1 \lor x_2$ since rules
  \tn{R$_{0}$} and \tn{R$_{1}$} are unordered in the decision set of
  \autoref{fig:ds01}.)
  \qed
\end{example}

Note that following the standard convention, we will \emph{always}
assume that DLs have a \emph{default rule}, with no literals, that
\emph{fires} when for all the preceding rules, the conjunction of
literals associated with that rule is inconsistent.
An example default rule for the DL shown in \autoref{ex:dl01} is
marked as \tn{R$_{\tn{\sc{def}}}$}.

\paragraph{Interpretability \& explanations.}
Interpretability is generally accepted to be a subjective concept,
without a formal definition~\cite{lipton-cacm18}. In this paper we
measure interpretability in terms of the overall succinctness of the
information provided by an ML model to justify a given prediction.
We say that a model is \emph{not} interpretable if for some instance,
the justification of a prediction is arbitrarily larger (on the number
of features) than a rigorous explanation (which we define next).
Moreover, and building on earlier work, we equate explanations with
the so-called
PI-explanations~\cite{darwiche-ijcai18,inms-aaai19,darwiche-ecai20,marquis-kr20},
i.e.\ subset-minimal sets of feature-value pairs that are sufficient
for the prediction.
More formally, given an instance $\mbf{v}\in\mbb{F}$, with prediction
$c\in\fml{K}$, i.e.\ $\tau(\mbf{v})=c$, a PI-explanation is a minimal
subset $\fml{X}\subseteq\fml{F}$ such that,
\begin{equation} \label{eq:axp}
\forall(\mbf{x}\in\mbb{F}).\bigwedge\nolimits_{j\in\fml{X}}(x_j=v_j)\limply(\tau(\mbf{x})=c)
\end{equation}
Another name for a PI-explanation is (a minimal/minimum)
\emph{abductive explanation} (AXp)~\cite{inms-aaai19,inams-aiia20}.
For simplicity, and depending on the context, we will use
\emph{PI-explanation} and the acronym \emph{AXp} interchangeably.

In a similar vein, we consider \emph{contrastive explanations}
(CXps)~\cite{miller-aij19,inams-aiia20}.
Contrastive explanation can be defined as a (subset-)minimal set of
feature-value pairs ($\fml{Y}\subseteq\fml{F}$) that suffice to
changing the prediction if they are allowed to take \emph{some
arbitrary} value from their domain.
Formally and as suggested in~\cite{inams-aiia20}, a CXp is defined as
a minimal subset $\fml{Y}\subseteq\fml{F}$ such that,
\begin{equation} \label{eq:cxp}
\exists(\mbf{x}\in\mbb{F}).\bigwedge\nolimits_{j\not\in\fml{Y}}(x_j=v_j)\land(\tau(\mbf{x})\not=c)
\end{equation}
(It is possible and simple to adapt the definition to target a
specific class $c'\not=c$.)
Moreover, building on the seminal work of Reiter~\cite{reiter-aij87},
recent work demonstrated a minimal hitting set relationship between
AXps and CXps~\cite{inams-aiia20}, namely each AXp is a minimal
hitting set (MHS) of the set of CXps and vice-versa.
%

%
%
For computing both kinds of explanations (AXps and CXps), we will work
with sets of features, aiming at finding minimal subsets.
It will also be helpful to describe explanations (concretely AXps) as
sets of literals.
As a result, starting from an instance $\mbf{v}$, we create a set of
literals $I_{\mbf{v}}=\{(x_j,v_j)|j\in\fml{F}\}$.
When  clear from the context, we will just use $I$ to denote the
literals of an instance.

An AXp $\fml{X}\subseteq\fml{F}$ can also be viewed as a conjunction
$\rho$ of a subset of the literals $I_{\mbf{v}}$ induced by the
instance $\mbf{v}$ that is \emph{sufficient} for the prediction.
Moreover, given a conjunction of literals $\rho$, we will associate a
predicate $\rho:\mbb{F}\to\{0,1\}$ (with the symbol duplication
deliberately aiming at simplifying the notation) to represent the
values taken by the conjunction of literals for each point $\mbf{x}$
in feature space.
As a result, we use $\rho\entails\tau_c$ to denote that $\rho$ is
sufficient for the prediction, i.e.\
\begin{equation}
\forall(\mbf{x}\in\mbb{F}).\rho(\mbf{x})\limply\tau_c(\mbf{x})
\end{equation}

We can also associate a conjunction of literals $\eta$ with each CXp,
such that the literals in $\eta$ are \emph{not} the literals specified
by the CXp, and such that the following condition holds,
\begin{equation}
  \exists(\mbf{x}\in\mbb{F}).\eta(\mbf{x})\land\neg\tau_c(\mbf{x})
\end{equation}
It should be noted that since a CXp is a minimal set of features, each
$\eta$ is a maximal set of literals such that there exists at least
one point in feature space such that the ML model predicts a class
other than $c$.


\begin{example}
  For the linear classifier of \autoref{ex:lc01}, let $\mbf{v}=(2,0)$,
  with prediction $\oplus$. In this case, the (only) AXp is
  $\fml{X}=\{1\}$, indicating that, as long as $x_1=2$, the value of
  the prediction is $\oplus$, independently of the value of $x_2$.
  Moreover, the AXp can also be represented by
  $\rho\triangleq(x_1=2)$. For this very simple example,
  $\fml{Y}=\{1\}$ is also a CXp. Indeed, if we allow feature 1 to
  take a value other than 2, then the assignment $\mbf{v}'=(0,0)$ will
  change the prediction. (More complex examples of CXps are studied
  later in the paper.)
  \qed
\end{example}

\begin{example}
  For the decision list of \autoref{ex:dl01}, let $\mbf{v}=(0,1)$,
  with prediction $\oplus$. In this case, the (only) AXp is
  $\fml{X}=\{2\}$, indicating that, as long as $x_2=1$, the value of
  the prediction is $\oplus$, independently of the value of $x_1$.
  Moreover, the AXp can also be represented by
  $\rho\triangleq(x_2=1)$. In this case, a CXp is also
  $\fml{Y}=\{2\}$. For example, the point in feature space
  $\mbf{v}=(0,0)$ will cause the prediction to change to $\ominus$.
  \qed
\end{example}

\begin{example} \label{ex:mhsdl01}
  To illustrate the hitting set duality relationship between AXps and
  CXps established in~\cite{inams-aiia20}, we consider a simple
  classifier represented as a decision list (DL) of three rules
  (including the default rule).
  Let $\fml{F}=\{1,2,3,4,5\}$, $D_i=\{0,1,2\}$, with $i=1,\dots,5$,
  and $\fml{K}=\{\ominus,\oplus\}$.
  Let the decision list be:
  \begin{center}
    \vspace*{-0.25cm}
    \begin{minipage}{0.5\textwidth}
      \frmeq{
        \small
        \begin{array}{llrl}
          \tn{R$_{0}$:} & \tn{IF} & x_1 = 1 \land x_2 = 1 & \tn{THEN
          } \ominus \\
          \tn{R$_{1}$:} & \tn{ELSE IF} & x_3\neq 1 & \tn{THEN }
          \oplus \\
          \tn{R$_{\tn{\sc{def}}}$:} & \tn{ELSE } & & \tn{THEN }
          \ominus \\
        \end{array}
      }
    \end{minipage}
  \end{center}
  We consider the instance $\mbf{v}=(1,1,1,1,1)$, which results in
  prediction $\ominus$.
  It is straightforward to see that, as long as $x_1=x_2=1$, then the
  prediction is $\ominus$.
  Also, it is less trivial (but still observable) that, as long as
  $x_3=1$, the prediction is guaranteed to be $\ominus$ as well.
  Moreover, it suffices to change the value of feature 3 and the value
  of \emph{either} feature 1 or feature 2 to change the prediction to
  $\oplus$, e.g.\ set $x_3=x_1=0$ or set $x_3=x_2=2$.
  As a result, we can conclude that the set of AXps is:
  $\mbb{X}=\{\{1,2\},\{3\}\}$, and the set of CXps is:
  $\mbb{Y}=\{\{1,3\},\{2,3\}\}$.
  Furthermore, from the minimal hitting set duality relationship
  between AXps and CXP's~\cite{inams-aiia20}, the sets in $\mbb{X}$
  are MHSes of the sets in $\mbb{Y}$ and vice-versa.
  (Clearly, we could follow the definitions and reach the same
  conclusions.)
  \qed
\end{example}



\section{Explaining Decision Lists} \label{sec:approach}

%
It is easy to see that just like DTs~\cite{iims-corr20}, DLs can also
exhibit redundancy in the literals used, and so the computation of
PI-explanations can be instrumental to conveying short explanations to
a human decision maker.

\begin{example} \label{ex:DL00}
  Consider a possible DL shown below for the function
  $f(x_1,\ldots,x_4)=(x_1\land x_2) \lor (x_3 \land x_4)$.
  (This DL is constructed by applying a ``direct translation'' of all
  the paths of the DT shown in~\cite[Figure 1b]{iims-corr20} from left
  to right into rules followed by appending a default rule predicting
  class $f=1$.)
  \begin{center} \vspace*{-0.35cm}
    \begin{minipage}{0.758\textwidth}
      \frmeq{
        \small
        \begin{array}{llrl}
          \tn{R$_{0}$:} & \tn{IF} & x_1 = 0 \land x_3 = 0 & \tn{THEN
          } f=0 \\
          \tn{R$_{1}$:} & \tn{ELSE IF} & x_1 = 0 \land x_3 = 1 \land
          x_4 = 0 & \tn{THEN } f=0 \\
          \tn{R$_{2}$:} & \tn{ELSE IF} & x_1 = 0 \land x_3 = 1 \land
          x_4 = 1 & \tn{THEN } f=1 \\
          \tn{R$_{3}$:} & \tn{ELSE IF} & x_1 = 1 \land x_2 = 0 \land
          x_3 = 0 & \tn{THEN } f=0 \\
          \tn{R$_{4}$:} & \tn{ELSE IF} & x_1 = 1 \land x_2 = 0 \land
          x_3 = 1 \land x_4 = 0 & \tn{THEN } f=0 \\
          \tn{R$_{5}$:} & \tn{ELSE IF} & x_1 = 1 \land x_2 = 0 \land
          x_3 = 1 \land x_4 = 1 & \tn{THEN } f=1 \\
          \tn{R$_{6}$:} & \tn{ELSE IF} & x_1 = 1 \land x_2 = 1 &
          \tn{THEN } f=1 \\
          \tn{R$_{\tn{\sc{def}}}$:} & \tn{ELSE } & & \tn{THEN } f=1 \\
        \end{array}
      }
    \end{minipage}
  \end{center}
  Consider a data instance $\mbf{v}=(1,0,1,1)$ and observe that rule
  \tn{R$_{5}$} fires the prediction $f=1$.
  Although rule \tn{R$_{5}$} has four literals, an AXp for instance
  $\mbf{v}$ is $(x_3=1)\land(x_4=1)$.
  Similarly, in practice one may expect examples of DLs s.t.\ AXps
  will be significanlty smaller than the rules that fire the
  corresponding predictions.

  This observation is confirmed by the experimental results
  in~\autoref{sec:res}, in that explanations can play an important
  role in understanding the predictions made by DLs.
  \qed
\end{example}


\subsection{DL Explainability} \label{sec:compl}

Perhaps surprisingly, whereas DTs can be explained in polynomial time,
DLs cannot.
This section proves a number of theoretical results related to
explainability of DLs.
Here we will be using the \emph{knowledge compilation} (KC)
map~\cite{darwiche-jair02}, which studied a wealth of queries on
knowledge representation languages.
We consider the concrete setting of classification, i.e.\ a language
$L$ denotes a classifier $\tau$ and a target prediction $c$.
Let us briefly define the queries of interest~\cite{darwiche-jair02}:
\begin{enumerate}
  \item \textbf{Satisfiability (SAT)}:
    \emph{if there exists a polynomial-time algorithm for deciding the
    satisfiability of $\tau(\mbf{x})=c$, i.e.\ to decide in
    polynomial time whether there exists $\mbf{x}\in\mbb{F}$ such
    that $\tau(\mbf{x})=c$.
    In the case of DLs, this problem will be referred to as \emph{DLSAT}.}
  \item \textbf{Implicant test (IM)}:
    \emph{if there exists a polynomial-time algorithm that decides
    whe\-ther a conjunction of literals $\rho$ is such that
    $\rho\entails\tau_c$, i.e.\
    $\forall(\mbf{x}\in\mbb{F}).\rho(\mbf{x})\limply\tau_c(\mbf{x})$.
    In the case of DLs, this problem will be referred to as \emph{DLIM}.}
\end{enumerate}
Similarly, we can define DNFSAT (which is trivially in $\tn{P}$) and
DNFIM (which is well-known to be in $\tn{P}$ only if
$\tn{P}=\tn{NP}$~\cite{darwiche-jair02}).

\begin{proposition} \label{prop:sat}
  DLSAT is \tn{NP}-complete.
\end{proposition}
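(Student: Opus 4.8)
The plan is to establish membership in \tn{NP} by a routine guess-and-check argument, and \tn{NP}-hardness by a direct reduction from CNF satisfiability.

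For membership, I would note that a decision list can be evaluated on a given point $\mbf{x}\in\mbb{F}$ in time polynomial in its size: scan the rules in order and return the class of the first rule whose antecedent (a term) is consistent with $\mbf{x}$, falling back to the default rule otherwise. Hence a nondeterministic machine can guess an assignment $\mbf{x}$, evaluate $\tau(\mbf{x})$, and accept iff $\tau(\mbf{x})=c$; this witnesses $\text{DLSAT}\in\tn{NP}$.

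For hardness, I would reduce from SAT. Given a CNF formula $\phi=\bigwedge_{i=1}^{k}C_i$ over Boolean variables $y_1,\dots,y_n$ with $C_i=\ell_{i,1}\lor\dots\lor\ell_{i,t_i}$, use Boolean features $\fml{F}=\{1,\dots,n\}$ (so $D_j=\{0,1\}$) and build the decision list with one rule per clause,
\[
\tn{R}_{i-1}\colon\ \tn{IF}\ \neg\ell_{i,1}\land\dots\land\neg\ell_{i,t_i}\ \tn{THEN}\ \ominus,\qquad i=1,\dots,k,
\]
followed by the default rule $\tn{R}_{\textsc{def}}\colon\ \tn{THEN}\ \oplus$. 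Each antecedent $\neg C_i$ is indeed a term (a tautological clause simply yields an inconsistent term that never fires, which is harmless), so this is a legitimate DL of size linear in $|\phi|$ and the construction is polynomial. The key claim is that for every assignment $\mbf{y}$, $\tau(\mbf{y})=\oplus$ iff $\mbf{y}\entails\phi$: if $\mbf{y}$ falsifies some clause $C_i$, the first such rule fires and yields $\ominus$; if $\mbf{y}$ satisfies every clause, no clause-rule antecedent is consistent, so the default rule fires and yields $\oplus$. Therefore there exists $\mbf{x}\in\mbb{F}$ with $\tau(\mbf{x})=\oplus$ iff $\phi$ is satisfiable, i.e.\ DLSAT with target class $\oplus$ is \tn{NP}-hard, and combined with membership it is \tn{NP}-complete.

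The argument is elementary, so there is no deep obstacle; the only point requiring care is the order-sensitive semantics of DLs. The reduction works precisely because the $\oplus$-answer is delivered by the \emph{default} rule, which forces each clause-rule to predict the opposite class $\ominus$ — this is what makes ``no rule fires'' coincide with ``every clause is satisfied''. (The two-class restriction assumed earlier is consistent with this construction, and the reduction carries over verbatim to a target class $\ominus$ by swapping $\oplus$ and $\ominus$ throughout.)
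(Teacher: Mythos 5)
Your proposal is correct and follows essentially the same route as the paper: membership via guess-and-evaluate, and hardness by reducing CNF satisfiability to DLSAT with one rule per negated clause predicting $\ominus$ and a default rule predicting $\oplus$, so that the default fires exactly on satisfying assignments. The only additions beyond the paper's argument are minor bookkeeping remarks (handling tautological clauses, symmetry of the target class), which do not change the substance.
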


\begin{proof}
  It is easy to see that the DLSAT is in NP. We simply guess an
  assignment to the features and check whether the prediction is the
  expected one according to the DL.
  To prove NP-hardness, the reduction of  CNFSAT to DLSAT is organized
  as follows:
  \begin{enumerate}[nosep]
  \item Consider a CNF formula $\phi$ with clauses
    $c_1,c_2,\ldots,c_m$.
  \item Let the variables in $\phi$ denote the features (w.l.o.g.\
    assume the features to be Boolean).
  \item Consider the negation of each clause $\neg{c_i}$ which
    represents a conjunction of literals $\bigwedge_{l_j\in c_i}
    \neg{l_j}$.
  \item For each $\neg{c_i}$, create a rule $\pi_i$ with antecedent
    $\bigwedge_{l_j\in c_i} \neg{l_j}$ and prediction $\ominus$.
  \item Create a default rule with prediction $\oplus$.
  \item Hence, formula $\phi$ is satisfiable if and only if there is
    an assignment to the features which results in prediction
    $\oplus$.
  \end{enumerate}
  The prediction is $\ominus$ if some clause $c_i$ is falsified, i.e.\
  $\neg{c_i}$ is satisfied (and hence rule $\pi_i$ fires).
  Otherwise, if all clauses are satisfied, and so all $\neg{c_i}$ are
  falsified, then the prediction is $\oplus$.
  \qed
\end{proof}

\begin{proposition} \label{prop:im}
  No polynomial-time algorithm exists for DLIM unless
  $\tn{P}=\tn{NP}$.
\end{proposition}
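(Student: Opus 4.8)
The plan is to establish that DLIM is \tn{coNP}-hard; the stated conclusion then follows, since if DLIM admitted a polynomial-time algorithm we would get $\tn{coNP}\subseteq\tn{P}$, and as \tn{P} is closed under complementation this forces $\tn{NP}=\tn{P}$. As a warm-up I would first record the (easy) matching upper bound: the complement of DLIM asks whether there exists $\mbf{x}\in\mbb{F}$ with $\rho(\mbf{x})=1$ and $\tau_c(\mbf{x})=0$, and such an $\mbf{x}$ is a polynomial-size certificate, since evaluating a term and a decision list on a given point is linear. Hence DLIM is in \tn{coNP}, and the reduction below in fact yields \tn{coNP}-completeness.

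For hardness I would reduce from DNFIM, which was recalled above to have no polynomial-time algorithm unless $\tn{P}=\tn{NP}$~\cite{darwiche-jair02}. Given a DNFIM instance consisting of a DNF $\psi=\bigvee_{i=1}^{k}t_i$ over Boolean variables together with a term $\rho$, I would build the decision list with rules $\tn{R}_i$: \tn{IF}~$t_i$~\tn{THEN}~$\oplus$, for $i=1,\dots,k$, taken in this order, followed by the mandatory default rule $\tn{R}_{\tn{\sc{def}}}$: \tn{THEN}~$\ominus$. On any input $\mbf{x}$, some $\oplus$-rule fires iff at least one term $t_i$ is consistent with $\mbf{x}$, so $\tau(\mbf{x})=\oplus \lequiv \psi(\mbf{x})=1$, i.e.\ the decision predicate $\tau_\oplus$ coincides with $\psi$ as a Boolean function. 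Therefore $\rho\entails\tau_\oplus$ iff $\rho\entails\psi$, so $(\rho,\tau,\oplus)$ is a yes-instance of DLIM iff $\rho$ is an implicant of $\psi$, and the construction is plainly polynomial. (Alternatively, one can reuse the decision list built in the proof of~\autoref{prop:sat}: there the CNF $\phi$ is unsatisfiable iff the empty term is an implicant of $\tau_\ominus$, which gives a direct reduction from CNF-UNSAT.)

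I do not anticipate a genuine obstacle here: the crux is simply that a DNF embeds verbatim into a decision list --- one $\oplus$-rule per term, then a $\ominus$-default rule --- so the implicant query carries over unchanged. The only points that need care are getting the direction of the reduction right (we reduce a known hard problem \emph{into} DLIM, not the reverse), checking that the ``else if'' semantics plus the mandatory default rule indeed make $\tau_\oplus$ equal to $\psi$, and noting that \tn{coNP}-hardness of this universally-quantified (tautology-like) query still licenses the ``unless $\tn{P}=\tn{NP}$'' phrasing.
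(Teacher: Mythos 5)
Your proof is correct, and it attacks the statement the same way the paper does --- a polynomial reduction from DNFIM to DLIM --- but with a different, and in fact cleaner, gadget. The paper keeps the term $p$ inside the constructed decision list: it places the terms $t_1,\dots,t_k$ of $\psi$ as $\ominus$-rules, follows them with a rule whose antecedent is $p$ and whose prediction is $\oplus$, and closes with an $\ominus$-default; the ordered semantics then make the $\oplus$-rule fire exactly on the points witnessing $p\land\bigwedge_i\neg t_i$, i.e.\ the counterexamples to $p\entails\psi$, so the implicant query resurfaces as ``is $p$ an implicant of $\tau_\ominus$'' (equivalently, unreachability of $\oplus$). Your construction instead realizes $\psi$ verbatim as $\tau_\oplus$ (one $\oplus$-rule per term, $\ominus$-default) and leaves the queried term $\rho$ outside the list, so $\rho\entails\tau_\oplus$ iff $\rho\entails\psi$ with nothing further to check. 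Both are valid; yours has one fewer rule, does not rely on the rule ordering at all, and makes the membership-in-\tn{coNP} observation and the resulting \tn{coNP}-completeness claim immediate --- points the paper does not state for DLs (it records the analogous completeness remark only for decision sets). Your parenthetical alternative via the \autoref{prop:sat} list and the empty term is also sound, and your closing caveat about the direction of the reduction and about \tn{coNP}-hardness licensing the ``unless $\tn{P}=\tn{NP}$'' phrasing is exactly the right thing to be careful about.
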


\begin{proof}
  We reduce DNFIM (i.e.\ IM for DNF) to DLIM, given that IM for DNF is
  well-known to be solvable in polynomial time only if
  $\tn{P}=\tn{NP}$~\cite{darwiche-jair02}.
  Let $\psi$ denote a DNF, with $k$ terms, i.e.\
  $\psi=t_1\lor\ldots\lor{t_k}$, and let $p$ denote a conjunction of
  literals.
  IM for DNF is to decide whether $p$ is an implicant of $\psi$, i.e.\
  $p\entails\psi$.
  The reduction of DNFIM to DLIM is organized as follows:
  \begin{enumerate}[nosep]
  \item For each conjunction of literals $t_i$ in $\psi$, create a
    rule with antecedent given by $t_i$, i.e.\ $\pi_i=t_i$, and
    prediction $\ominus$.
  \item The $(k+1)^{\tn{th}}$ rule is created as follows: the
    antecendent is $p$ and the prediction is $\oplus$.
  \item Finally, we add a default rule with prediction $\ominus$.
  \end{enumerate}
  As a result, the prediction will be $\oplus$ if and only if
  $p\land\bigwedge_{i\in[k]}(\neg{t_i})$ is satisfied, and so $p\nentails\psi$, in
  which case $p$ is not an implicant of $\psi$.
  \qed
\end{proof}

Given the above results, we can conclude the following.

\begin{proposition} \label{prop:expl}
  There is no polynomial-time algorithm for finding an AXp of a
  decision list unless $\tn{P}=\tn{NP}$.
\end{proposition}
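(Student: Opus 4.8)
The plan is to reduce DLIM to the problem of finding an AXp, so that any polynomial-time algorithm for the latter would, via \autoref{prop:im}, force $\tn{P}=\tn{NP}$. Concretely, suppose for contradiction there is a polynomial-time procedure $\mathcal{A}$ that, given a DL, a target class, and an instance predicted to be of that class, returns one AXp. I would use $\mathcal{A}$ to decide, for an arbitrary DL $D$ (computing $\tau$), class $c$, and a consistent term $\rho$ over a feature subset $S\subseteq\fml{F}$, whether $\rho\entails\tau_c$ (the case of an inconsistent $\rho$ being trivial). First fix an instance $\mbf{v}$ that is a completion of $\rho$: set $v_j$ to the value prescribed by $\rho$ for $j\in S$, and to an arbitrary domain value for $j\in\fml{F}\setminus S$. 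Evaluating $\tau(\mbf{v})$ takes polynomial time; if $\tau(\mbf{v})\neq c$, then $\mbf{v}$ witnesses $\rho\nentails\tau_c$ and we are done, so assume $\tau(\mbf{v})=c$.

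The core step is to build, in polynomial time, a smaller DL $\hat{D}$ over the features $T=\fml{F}\setminus S$, obtained from $D$ by substituting $x_j:=v_j$ for every $j\in S$ (a literal $x_j=v_j$ becomes true and is dropped from its antecedent; a literal $x_j=v_j'$ with $v_j'\neq v_j$ makes that rule's antecedent false, so the rule is deleted; the default rule is kept). By construction, $\hat{D}$ on an assignment $\mbf{z}$ to $T$ computes $\tau(\mbf{v}|_S,\mbf{z})$, and since the points satisfying $\rho$ are exactly those of the form $(\mbf{v}|_S,\mbf{z})$, we get $\rho\entails\tau_c$ if and only if $\hat{D}$ is the constant function $c$. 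Now run $\mathcal{A}$ on $\hat{D}$ with class $c$ and the instance $\mbf{v}|_T$ (its prediction is $\tau(\mbf{v})=c$, so an AXp exists). If $\hat{D}\equiv c$, then the empty feature set is already sufficient for the prediction and, being subset-minimal, is the \emph{unique} AXp, so $\mathcal{A}$ must return $\emptyset$; conversely, if $\mathcal{A}$ returns $\emptyset$, then by \autoref{eq:axp} the classifier $\hat{D}$ is constant $c$. Hence $\rho\entails\tau_c$ iff $\mathcal{A}$ returns the empty set, which is a polynomial-time decision procedure for DLIM, contradicting \autoref{prop:im}.

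The main obstacle, and precisely what the construction is designed to circumvent, is that an AXp oracle returns \emph{some} minimal sufficient feature set and on its own says nothing about whether a \emph{prescribed} term $\rho$ (equivalently, a prescribed set $S$) is sufficient. Substituting the instance values for $S$ collapses exactly that question into ``is the resulting DL constant?'', which is the one thing a single call to the AXp oracle can reveal, namely by testing whether the returned AXp is empty. The remaining points to verify are routine: the handling of an inconsistent $\rho$, the case where $\rho$ fixes every feature (then $T=\emptyset$ and $\hat{D}$ is a constant handled by the same argument), and the well-definedness of an AXp whenever the prediction equals $c$.
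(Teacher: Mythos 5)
Your proof is correct and follows the same route as the paper's (very terse) proof sketch: a polynomial-time AXp finder would yield a polynomial-time decision procedure for DLIM, which by \autoref{prop:im} would force $\tn{P}=\tn{NP}$. The paper omits the actual reduction, and your construction --- restricting the DL by the term $\rho$ and testing whether the returned AXp is empty (equivalently, whether the restricted classifier is constant) --- is a sound, complete, and polynomial-time way to supply the missing detail.
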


\begin{proof}[sketch]
  If there was a polynomial-time algorithm for finding an AXp for a DL
  then we would be able to solve IM for DL in polynomial time.
  This would in turn imply that IM for DNF is solvable in polynomial
  time.
  \qed
\end{proof}

\paragraph{A Word on Decision Sets.}
Although decision sets are unordered (in contrast to DLs), this fact
does not simplify the computation of PI-explanations.
(In what follows, we assume that a DS implements a total
classification function, which is not the case in general due to the
issue of overlap~\cite{ipnms-ijcar18} --- otherwise, PI-explanations
would be ill-defined.)

\begin{proposition} \label{prop:ds}
  Finding an AXp for a DS is hard for $\ddp$.
\end{proposition}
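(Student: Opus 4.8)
The plan is to reduce the canonical $\ddp$-complete problem SAT-UNSAT --- given CNF formulas $\phi_1$ and $\phi_2$, decide whether $\phi_1$ is satisfiable \emph{and} $\phi_2$ is unsatisfiable --- to the task of computing an AXp of a DS. The point is that the two ``halves'' of SAT-UNSAT mirror the two requirements on an AXp $\fml{X}$: \emph{sufficiency} of $\fml{X}$ for the prediction is a coNP-type (``for all completions'') condition whose hardness is essentially the DNF-implicant hardness already exploited in \autoref{prop:im} (a DNF is itself a DS all of whose rules carry one class); and \emph{subset-minimality} is an NP-type condition, since each $\fml{X}\setminus\set{j}$ must fail to be sufficient. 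So the reduction has to fuse one NP instance ($\phi_1$) and one coNP instance ($\phi_2$) into a single DS, a single instance, and a single designated feature set, so that the set is an AXp precisely when $\phi_1$ is SAT and $\phi_2$ is UNSAT.

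Concretely, I would use features $Y\cup Z\cup\set{g,h}$, where $Y$ and $Z$ are the (w.l.o.g.\ disjoint, Boolean) variables of $\phi_1$ and $\phi_2$, and $g,h$ are two ``control'' features with domain $\set{0,1,2}$. Every non-default rule predicts $\ominus$ and the default rule predicts $\oplus$; since all non-default rules agree, no overlap arises and the DS is a total function with $\tau(\mbf{x})=\ominus$ iff some rule fires. The rules are: ``$g=0$'' $\to\ominus$; ``$h=0$'' $\to\ominus$; for each clause $c_i$ of $\phi_2$, ``$g=1\land h=1\land\neg c_i(Z)$'' $\to\ominus$; and for each clause $d_k$ of $\phi_1$, ``$g=2\land h=1\land\neg d_k(Y)$'' $\to\ominus$. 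As the instance I take $\mbf{v}$ with $g=h=1$, $Y$ arbitrary, and $Z$ set to some falsifying assignment of $\phi_2$ (computable in polynomial time; if $\phi_2$ is valid, output a trivial ``no''), which makes $\tau(\mbf{v})=\ominus$. The designated set is $\fml{X}=\set{g,h}$. One then verifies: with $g=h=1$ fixed and $Y,Z$ free only the $\phi_2$-rules are live, so the prediction is constantly $\ominus$ iff $\phi_2$ is unsatisfiable, i.e.\ $\fml{X}$ is sufficient iff $\phi_2$ is UNSAT; given that, dropping $h$ always destroys sufficiency (the point $g=1,h=2$ fires no rule), while dropping $g$ leaves $\set{h}$, and with $\phi_2$ already known unsatisfiable (so its rules vacuous at $g=1$) the only way a completion of $\set{h}$ can escape $\ominus$ is to take $g=2$ with a satisfying assignment of $\phi_1$ --- so $\set{h}$ fails to be sufficient iff $\phi_1$ is SAT. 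Hence $\fml{X}=\set{g,h}$ is an AXp of $\mbf{v}$ iff $\phi_1$ is SAT and $\phi_2$ is UNSAT, and whenever this holds $\set{g,h}$ is in fact the \emph{unique} AXp, since any feature set omitting $g$ or omitting $h$ admits a completion that fires no rule and hence is not sufficient. Consequently a polynomial-time algorithm that returns some AXp of a DS would decide SAT-UNSAT, giving the claimed $\ddp$-hardness.

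The main obstacle --- and essentially the only content beyond bookkeeping --- is designing the gadget so that the sufficiency and minimality conditions become genuinely independent: this is exactly the purpose of the disjoint blocks $Y,Z$ and of the controls $g,h$, whose unused third domain value carves out a rule-free region (which simultaneously prevents $\emptyset$ from being sufficient and lets us ``switch off'' one of the two embedded formulas at will). The remaining ingredients are routine: the w.l.o.g.\ preprocessing (disjointness and Booleanity of the variables, computing a falsifier of $\phi_2$, and the degenerate valid-$\phi_2$ case), and, for the matching upper bound that would upgrade this to $\ddp$-completeness of ``is $\fml{X}$ an AXp of a DS'', the observation that sufficiency lies in coNP and minimality in NP because evaluating a DS at a point is polynomial.
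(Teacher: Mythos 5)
Your route is genuinely different from the paper's. The paper's proof is essentially a one-line identification: a DS whose non-default rules all predict $\oplus$ \emph{is} a DNF, so finding an AXp of the DS given $\mbf{v}$ is exactly finding a prime implicant of a DNF among the literals of a given satisfying assignment, and the paper simply cites the known $\ddp$-completeness of that problem~\cite{umans-tcad06}. You instead re-derive the hardness from scratch by a direct reduction from \textsc{Sat-Unsat}, using the control features $g,h$ to decouple the coNP half (sufficiency, driven by $\phi_2$) from the NP half (minimality, driven by $\phi_1$). This is more self-contained and makes the $\ddp$ structure of the problem explicit, at the cost of a gadget that has to be checked carefully. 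Your core equivalence is correct: $\set{g,h}$ is sufficient iff $\phi_2$ is unsatisfiable, $\set{g}$ is never sufficient, and $\set{h}$ is sufficient iff both $\phi_1$ and $\phi_2$ are unsatisfiable, so $\set{g,h}$ is an AXp iff $\phi_1\in\textsc{Sat}$ and $\phi_2\in\textsc{Unsat}$. That already gives $\ddp$-hardness of the decision problem ``is a given set an AXp of a DS?'', which is what the paper's proposition (read together with the remark that follows it) actually establishes.

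There is, however, one genuine gap: your uniqueness claim in the yes-case is false, and it is the only thing supporting your final step from the decision problem to the search problem (``an algorithm returning \emph{some} AXp decides \textsc{Sat-Unsat} by checking whether it returned $\set{g,h}$''). You assert that any set omitting $g$ admits a rule-free completion, but this fails because $\mbf{v}$'s $Y$-part is arbitrary: if $Y'\subseteq Y$ is such that $\phi_1$ restricted to $\mbf{v}$'s values on $Y'$ becomes unsatisfiable, then $\set{h}\cup Y'$ is sufficient (at $g=0$ the first rule fires, at $g=1$ a $\phi_2$-rule fires since $\phi_2$ is UNSAT, and at $g=2$ a $\phi_1$-rule fires since the restricted $\phi_1$ is UNSAT), and a minimal such set is a second AXp. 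Concretely, for $\phi_1=(y_1\lor y_2)$ and $\mbf{v}$ with $y_1=y_2=0$, the set $\set{h,y_1,y_2}$ is an AXp alongside $\set{g,h}$, so the algorithm need not return $\set{g,h}$ even on a yes-instance. The reduction is nevertheless repairable without changing the gadget: in the yes-case every AXp contains no $Z$-variable and properly contains $\set{h}$, whereas in every no-case the returned AXp is either exactly $\set{h}$ (both formulas UNSAT) or contains a $Z$-variable ($\phi_2$ SAT); so the correct output test is ``the returned AXp mentions no variable of $\phi_2$ and is not $\set{h}$''. With that replacement (or by retreating to the decision version, which your equivalence already handles), the argument goes through.
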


\begin{proof}[sketch]
  It is known~\cite{ipnms-ijcar18} that decision sets can be
  associated with DNF formulas.
  It is also known~\cite{umans-tcad06} that finding a prime implicant
  (PI) of a DNF $D$ given a satisfying assignment $\mbf{v}$ is
  complete for $\ddp$.
  Given the aforementioned connection between DSs and DNFs, we show
  here that the above problem can be reduced to finding a
  PI-explanation of a DS.

  Let the terms in the DNF $D$ become the rules for prediction
  $\oplus$ in the corresponding DS.
  Also, let the default rule of the DS predict $\ominus$.
  Hence, a set of literals $\rho$ (contained in the literals induced
  by $\mbf{v}$) is a PI of the DNF $D$ iff $\rho$ is a PI-explanation
  for the DS prediction $\oplus$ given $\mbf{v}$.
  \qed
\end{proof}

\begin{remark}
  In the case of decision sets, it is also simple to observe that
  deciding whether a set of literals $\rho$ is an AXp is in $\ddp$.
  For that, one needs to prove first that the set of literals $\rho$
  entails prediction $\oplus$; this problem is clearly in coNP.
  Additionally, one also needs to prove subset-minimality of $\rho$,
  i.e.\ that removing any single literal from $\rho$ results in a
  subset of literals that does not entail the prediction $\oplus$.
  (We can consider $|\rho|$ sets of literals, each of which removes a
  literal from $\rho$ to get a set of literals $\rho_k$, and check
  that there are $|\rho|$ assignments such for each $\rho_k$ we get a
  different prediction $\ominus$.)
  The latter problem is in NP.
  Therefore and given \autoref{prop:ds}, we can establish
  $\ddp$-completeness of the decision version of finding a
  PI-explanation in the case of DSs.
\end{remark}

\paragraph{DLs vs.\ DTs and vs. DSs.}
The results of this section are somewhat surprising in terms of
comparing DTs with DLs and DSs.
On the one hand, satisfiability query is trivially in $\tn{P}$ for DTs
and DSs, but it is NP-complete for DLs.
On the other hand, AXps can be computed in polynomial time for
DTs\cite{iims-corr20}, but a polynomial-time algorithm for computing
AXps for DLs and DSs would imply $\tn{P}=\tn{NP}$.


\subsection{Explaining Arbitrary DLs with SAT} \label{sec:anydl}
%
When explaining decision lists,
one can use the work on computing rigorous
abductive~\cite{darwiche-ijcai18,inms-aaai19} and contrastive
explanations~\cite{inams-aiia20} for ML models.
This section describes a novel propositional encoding for DL
classifiers that can be exploited by the generic approach
of~\cite{inms-aaai19,inams-aiia20}.

Let $\mbf{v}$ denote a point in feature space with prediction
$c\in\fml{K}$.
Moreover, let the rule that fires on $\mbf{v}$ be $i\in\mfrk{R}$.
Note that for an arbitrary rule $k\in\mfrk{R}$ to fire, the following
constraint must hold true:
\begin{equation} \label{eq:dl01}
  \bigwedge_{\substack{r_j\in\mfrk{R}\\ 
      \mfrk{o}(j)<\mfrk{o}(k)}}\neg(\mfrk{l}(j))\land\mfrk{l}(k)
\end{equation}
Constraint~\eqref{eq:dl01} encodes the fact that the literals in all
the rules preceding rule $k$ must not fire and the rule $k$ must fire.
(Recall that $\mfrk{l}(i)$ represents the set of literals of rule
$i$).
This constraint is straightforward to clausify, i.e.\ convert to CNF.
Moreover, let $\varphi(i)$ denote the set of clauses resulting from
clausification of the constraint~\eqref{eq:dl01} for rule $i$ to fire.

Given a set of literals $\rho$, $\rho$ is an implicant of the decision
function associated with the DL (i.e.\ $\rho$ is an AXp) for the
instance $\mbf{v}$ and the corresponding prediction $\mfrk{c}(i)$ if:
\begin{equation} \label{eq:dl02}
\rho\entails
\bigvee_{\substack{j\in\mfrk{R}\\ 
    \mfrk{c}(j)=\mfrk{c}(i)}} \varphi(j)
\end{equation}
i.e.\ for any point $\mbf{x}$ in feature space, if $\rho(\mbf{x})$
holds true, then one of the rules predicting the same class
$\mfrk{c}(i)$ as rule $i$ must hold true as well.
Constraints \eqref{eq:dl01} and \eqref{eq:dl02} comprise the
propositional encoding that can be used in the framework
of~\cite{inms-aaai19} to compute one AXp for the prediction made by a
decision list for a given input instance.
Note that computing such an AXp $\rho$ is typically done by reducing
the initial set of literals $I_{\mbf{v}}$, which clearly entails the
right-hand side of \eqref{eq:dl02}, i.e.\
$I_{\mbf{v}}\entails\bigvee_{\substack{j\in\mfrk{R},\mfrk{c}(j)=\mfrk{c}(i)}}\varphi(j)$.
Also note that in practice it is convenient to negate this tautology
and instead deal with its negation, which is obviously unsatisfiable.
Following~\cite{inms-aaai19,inams-aiia20}, this enables one to apply
the well-developed apparatus for computing one AXp (resp. CXp) as an
MUS (resp. MCS) of the negated
formula~\cite{junker-aaai04,lms-sat04,liffiton-jar08,liffiton-cj09,bms-fmcad11,blms-aicom12,mshjpb-ijcai13,mpms-ijcai15,iplms-cp15,mipms-sat16,ijms-cj16},
but also for enumerating a given number of all AXps (resp. CXps)
through MUS (resp. MCS)
enumeration~\cite{ls-sat05,mlms-hvc12,pms-aaai13,lm-cpaior13,lpmms-cj16}.


\begin{example} \label{ex:enc}
  As mentioned above, when computing an AXp in the form
  of~\eqref{eq:dl02}, it is convenient to negate the tautology
  $I_{\mbf{v}}\entails\bigvee_{\substack{j\in\mfrk{R},\mfrk{c}(j)=\mfrk{c}(i)}}\varphi(j)$
  and instead work with unsatisfiable formula
  \begin{equation*} \label{eq:ex}
    I_{\mbf{v}} \land
  \bigwedge_{\substack{j\in\mfrk{R}\\ 
      \mfrk{c}(j)=\mfrk{c}(i)}} \neg{\varphi(j)}
  \end{equation*}
  Here, the left part $I_\mbf{v}$ of the conjunction serves as the set
  $\fml{S}$ of unit-size soft clauses, each represented by a literal
  assigning a value to a feature.
  This way AXps and CXps can be found as minimal subsets of $\fml{S}$
  (i.e.\ MUSes or MCSes, respectively), subject to the hard clauses
  $\fml{H}\triangleq
  \bigwedge_{\substack{j\in\mfrk{R},\mfrk{c}(j)=\mfrk{c}(i)}}
  \neg{\varphi(j)}$.
  Also observe that the negation $\neg\varphi(k)$ (recall that
  $\varphi(k)$ enforces rule $k$ to fire) constitutes the disjunction
  $$
    \neg{\mfrk{l}(k)}\lor \bigvee_{\substack{r_j\in\mfrk{R}\\
    %
    \mfrk{o}(j)<\mfrk{o}(k)}}\mfrk{l}(j)
  $$
  which enforces that either rule $k$ does not fire or one of the
  preceding rules fires.
  Also, to enforce that the default rule does not fire, we can simply
  require one of the non-default rules of the DL to fire.
  Finally, note that the hard clauses $\fml{H}$ encode the fact of
  \emph{misclassification}, which is clearly impossible when the input
  instance $I_\mbf{v}$ is given as the soft clauses $\fml{S}$, thus
  making formula $\fml{H}\land\fml{S}$ unsatisfiable.

  Now, consider the DL from \autoref{ex:DL00} and recall that rule
  \tn{R$_{5}$} fires prediction $f=1$ for the instance
  $\mbf{v}=(1,0,1,1)$.
  As prediction $f=1$ is represented by rules \tn{R$_{2}$},
  \tn{R$_{5}$}, \tn{R$_{6}$}, \tn{R$_{\tn{\sc{def}}}$}, our hard clauses
  $\fml{H}$ must enforce that none of them fires.
  Given the above, the hard clauses $\fml{H}$ are formed by
  $$
  \fml{H}=\left\{
    \def\arraystretch{1.5}
    \begin{array}{ll}
    \begin{array}{llr}
      \neg\varphi(2) & \triangleq & \left[\neg\mfrk{l}(2) \lor \bigvee_{j=0}^1 \mfrk{l}(j)\right]; \\
      \neg\varphi(6) & \triangleq & \left[\neg\mfrk{l}(6) \lor \bigvee_{j=0}^5 \mfrk{l}(j)\right]; \\
    \end{array}
    &
    \begin{array}{llr}
      \neg\varphi(5) & \triangleq & \left[\neg\mfrk{l}(5) \lor \bigvee_{j=0}^4 \mfrk{l}(j)\right]; \\
      \neg\varphi_{\tn{\sc{def}}} & \triangleq & \left[\bigvee_{j=0}^6 \mfrk{l}(j)\right]\hspace*{4.5pt} \\
    \end{array}
    \end{array}
  \right\}
  $$
  Here, CNF encoding of terms $\mfrk{l}(j)$ is omitted as it is
  trivial to obtain.
  \qed
\end{example}

As can be observed in~\autoref{ex:enc}, the propositional encoding
described in this section targets simplicity and for this reason it
exhibits redundancy, e.g.\ expressions
$\bigvee_{j=0}^{k-1}{\mfrk{l}(j)}$ in the representation of
$\neg{\varphi(k)}$ are repeated for every $k'>k$.
As shown in~\autoref{sec:res}, the performance results suggest that
the proposed encoding scales well on DLs of realistic size.
Nevertheless, a number of improvements can be envisioned, which add
more structure to the encoding, but with the cost of using additional
auxiliary variables.
Our initial experiments suggest no significant gains were obtained
with a more complex propositional encoding.



\section{Experimental Results} \label{sec:res}
This section aims at assessing the proposed SAT-based approach to
computing and enumerating rigorous abductive explanations
(AXps)~\cite{darwiche-ijcai18,inms-aaai19} as well as contrastive
explanations (CXps)~\cite{inams-aiia20} for decision list models.
First, the approach will be tested from the perspective of raw
performance, followed by additional information on the comparative
number of AXps and CXps as well as their length.

\paragraph{Experimental Setup.}

The experiments were performed on a MacBook Pro laptop running macOS
Big Sur 11.2.3.
Therefore, each individual process was run on a Quad-Core Intel Core
i5-8259U 2.30 GHz processor with 16 GByte of memory.
The memory limit was set to 4 GByte while the time limit used was set
to 1800 seconds, for each individual process to run.

\paragraph{Prototype Implementation.}
A prototype implementation~\footnote{The prototype is available at
\url{https://github.com/alexeyignatiev/xdl-tool}.} of the proposed
approach was developed as a Python script instrumenting incremental
calls to the Glucose~3 SAT solver~\cite{audemard-sat13} using the
PySAT toolkit~\cite{imms-sat18}.
The implementation targets the computation of one explanation (either
an AXp or a CXp) and enumeration of a given number of those, with a
possibility to enumerate all.

It is known~\cite{inams-aiia20} that a CXp can be computed as an MCS
for the encoding formula discussed above and hence CXp enumeration is
implemented in the prototype as LBX-based MCS
enumeration~\cite{mpms-ijcai15}.
Similarly, AXp corresponds to an MUS of the formula and, as a result,
AXp enumeration is done using the MARCO-like MUS enumeration
approach~\cite{pms-aaai13,lm-cpaior13,lpmms-cj16} due to the hitting
set duality between AXps and CXps~\cite{inams-aiia20}.
Concretely, the MARCO-like explainer is organized as two
interconnecting oracles: (i)~a SAT oracle checking (un)satisfiability
of a selected set of clauses of the formula, and (ii)~a minimal
hitting set (MHS) oracle, which computes minimal hitting sets of a
current collection of MCSes of the formula obtained so far.
The MHS oracle was implemented on top of the RC2 MaxSAT solver
exploited incrementally~\cite{imms-jsat19}.
Each iteration of the MARCO-like explainer computes either an AXp or a
CXp.
The former are reported and blocked (by adding a single clause to the
MHS oracle) while the latter are used later as the sets \emph{to hit}.
The explainer stops as soon as there are no more minimal hitting set
identified by the MHS oracle.
As a result, the MARCO-like explainer produces both AXps and CXps upon
the end of execution.
Note that thanks to the use of MaxSAT-based MHS oracle, AXps computed
this way are irredundant, i.e.\ subset-minimal, and do not have to be
reduced further while CXps do need to be reduced by a dedicated
reduction procedure (see below).
Also note that the MARCO-like approach can also be used in a
\emph{dual way}, i.e.\ targeting CXp enumeration and computing AXps as
a by-product.
This mode of operation of the explainer has also been implemented in
the developed prototype.

It is also important to mention that all the three modes of operation
make incremental use of the underlying SAT oracles.
As such, the LBX-like CXp enumeration computes an explanation, blocks
it by adding a single clause and proceeds to the next CXp.
Furthermore, once all explanations for a given data instance are
enumerated, all the previously added blocking clauses are
\emph{disabled} and the enumeration process starts again for a new
data instance.
This is done with the use of unique \emph{selector} variables
introduced for each data instance.
On the contrary, the MARCO-like approaches accumulate and block all
explanations on the MHS oracle side.
This enables one to keep the same SAT oracle on the checking side of
the approach while restarting the MHS oracle from scratch, i.e.\ with
an empty collection of sets to hit, for each new data instance.

Finally, the following heuristics are used.
LBX-like computation of a single CXp makes use of the \emph{Clause D}
(CLD) heuristic~\cite{mshjpb-ijcai13}.
Computation of a single AXp is done as a simple deletion-based linear
search procedure~\cite{msl-sat11}, strengthened by exhaustive
enumeration of unit-size MCSes used to bootstrap the MHS oracle.
Although a more sophisticated algorithm
QuickXPlain~\cite{junker-aaai04} has been also implemented, it turned
out to be outperformed by the aforementioned simpler alternative in
this concrete setting.

\paragraph{Benchmarks and Methodology.}
Experimental evaluation was performed on a subset of datasets selected
from a few publicly available sources.
In particular, these include datasets from UCI Machine Learning
Repository~\cite{uci} and Penn Machine Learning
Benchmarks~\cite{pennml} as well as datasets previously studied in the
context of ML explainability~\cite{guestrin-aaai18} and
fairness~\cite{fairml17,fairness15}.
The number of selected datasets is 72.
%
%
We applied the approach of 5-fold cross validation, i.e.\ each dataset
was randomly split into 5 chunks of instances; each of these chunks
served as test data while the remaining 4 chunks were used to train
the classifiers.
As a result, each dataset (out of 72) resulted in 5 individual pairs
of training and test datasets represented by 80\% and 20\% of data
instances.
Therefore, the total number of training datasets considered in the
evaluation is 360.

Given a training dataset, i.e. represented by 4 chunks of the original
data, a decision list model was trained with the use of the well-known
heuristic algorithm CN2~\cite{clark-ml89,clark-ewsl91}\footnote{Recent
  alternative approaches to \emph{sparse} decision
  lists~\cite{rudin-kdd17,rudin-jmlr17,rudin-mpc18} have also been
  considered but were eventually discarded for two reasons: (1)~they
  can only deal with binary data and (2)~they produce sparse decision
lists containing a couple of rules and a few literals in total ---
i.e.\ these methods do not provide models that would be of interest
for our work.}, the implementation of which was taken from the
well-known Python toolkit
Orange\footnote{\url{https://orangedatamining.com/}}.
The time spent on training the models was ignored.
Next, the prototype explainer was run in one of the three modes
described above, to enumerate \emph{all} explanations (either AXps, or
CXps) for each of the instances of the original 100\% data.
Also and as mentioned above, the explainer was given 1800 seconds for
each of the 360 datasets/models.

Note that the number of rules in the decision list models constructed
by CN2 for the target datasets varied from 6 to 2055.
Also, the total number of non-class, i.e.\ solely antecedent, literals
used in the models varied from 6 to 6754.
Finally, propositional formulas encoding the explanation problems for
these models had from 7 to 15340 variables and from 9 to 3932987
clauses.
It is important to mention that all data was treated as categorical
and hence the propositional formulas given to the encoder incorporated
cardinality constraints enforcing that a feature can take exactly one
value; in the experiments, these constraints were encoded into CNF
using the pairwise encoding~\cite{prestwich-chapt21}.
Although left untested, other cardinality encodings would result in
smaller formulas --- the pairwise encoding was selected intentionally
in order to produce larger formulas and so to test scalability of the
proposed SAT-based approach.

\begin{figure*}[!t]
  \begin{subfigure}[b]{0.49\textwidth}
    \centering
    \includegraphics[width=\textwidth]{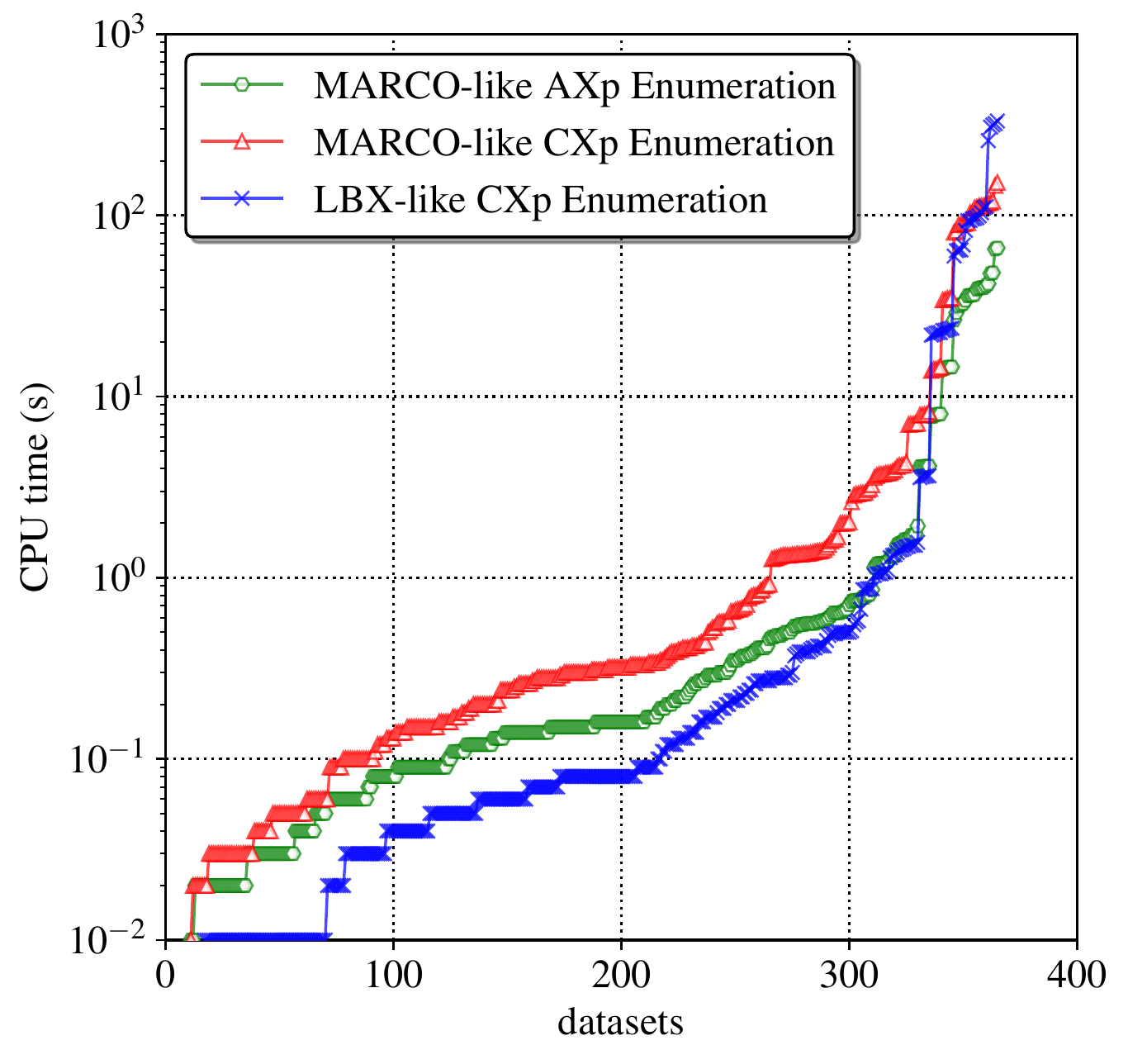}
    \caption{Raw performance comparison}
    \label{fig:cactus-perf}
  \end{subfigure}%
  \begin{subfigure}[b]{0.49\textwidth}
    \centering
    \includegraphics[width=\textwidth]{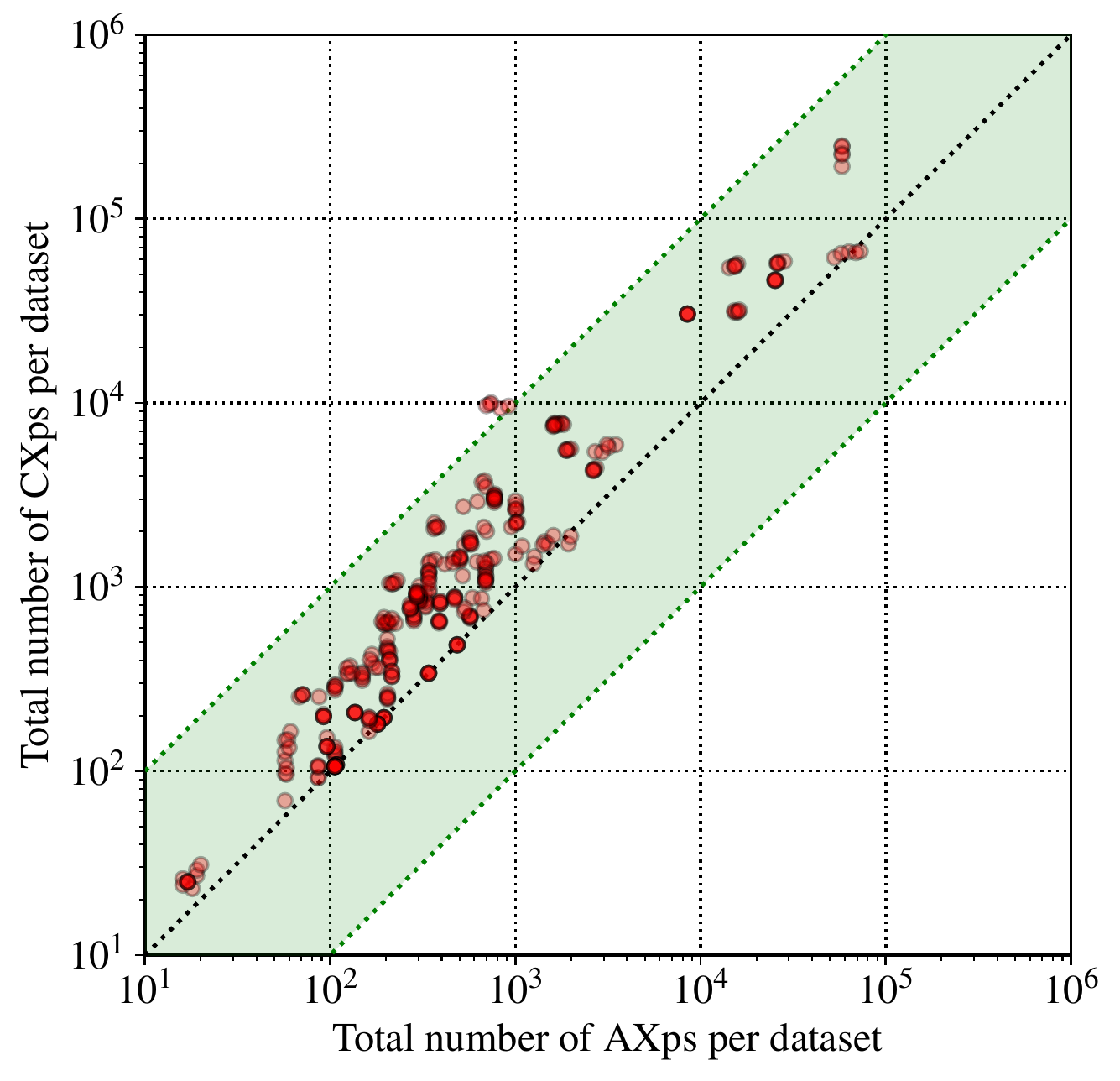}
    \caption{Total number AXps and CXps of per dataset}
    \label{fig:scatter-noftot}
  \end{subfigure}
  \caption{Performance of the three operation modes and the total
  number of explanations per dataset they enumerate.}
  \label{fig:perf}
\end{figure*}

\paragraph{Raw Performance.}

\autoref{fig:cactus-perf} depicts a cactus plot showing the raw
performance of the explainer working in the three selected modes of
operation.
(Note that the CPU time axis is scaled logarithmically.)
As can be observed, all the algorithms are able to finish successful
computation of all the target explanations for all the data instances
of each of the 360 benchmark datasets within the given time limit.
Surprisingly, the best performing configuration overall turns out to
be MARCO-based AXp enumeration.
MARCO-based CXp enumeration is a bit slower.
Recall that both MARCO-based modes end up enumerating the same sets of
explanations including AXps (CXps, resp.) and dual CXps (dual AXps,
resp.).
Also, recall that the only major difference between the two
configurations is the type of the target explanations that are
provided by the MHS oracle while the dual explanations have to be
reduced by a dedicated reduction procedure.
Therefore, the performance difference shown suggests that in practice
it may be more beneficial to target AXps and so to reduce dual CXps
than doing the opposite (which is not really surprising given that the
former correspond to MUS extraction while the latter correspond to MCS
extraction).
Finally, it should be mentioned that although LBX-like CXp explanation
works the most efficiently for most of the benchmarks, in some cases
it is outperformed by the competitors, which may be explained by the
need to incrementally block a significant number of previously
computed solutions (recall that, on the contrary, the MARCO-like
configurations restart the MHS oracle from scratch for every new data
instance).

\begin{figure*}[!t]
  \begin{subfigure}[b]{0.49\textwidth}
    \centering
    \includegraphics[width=\textwidth]{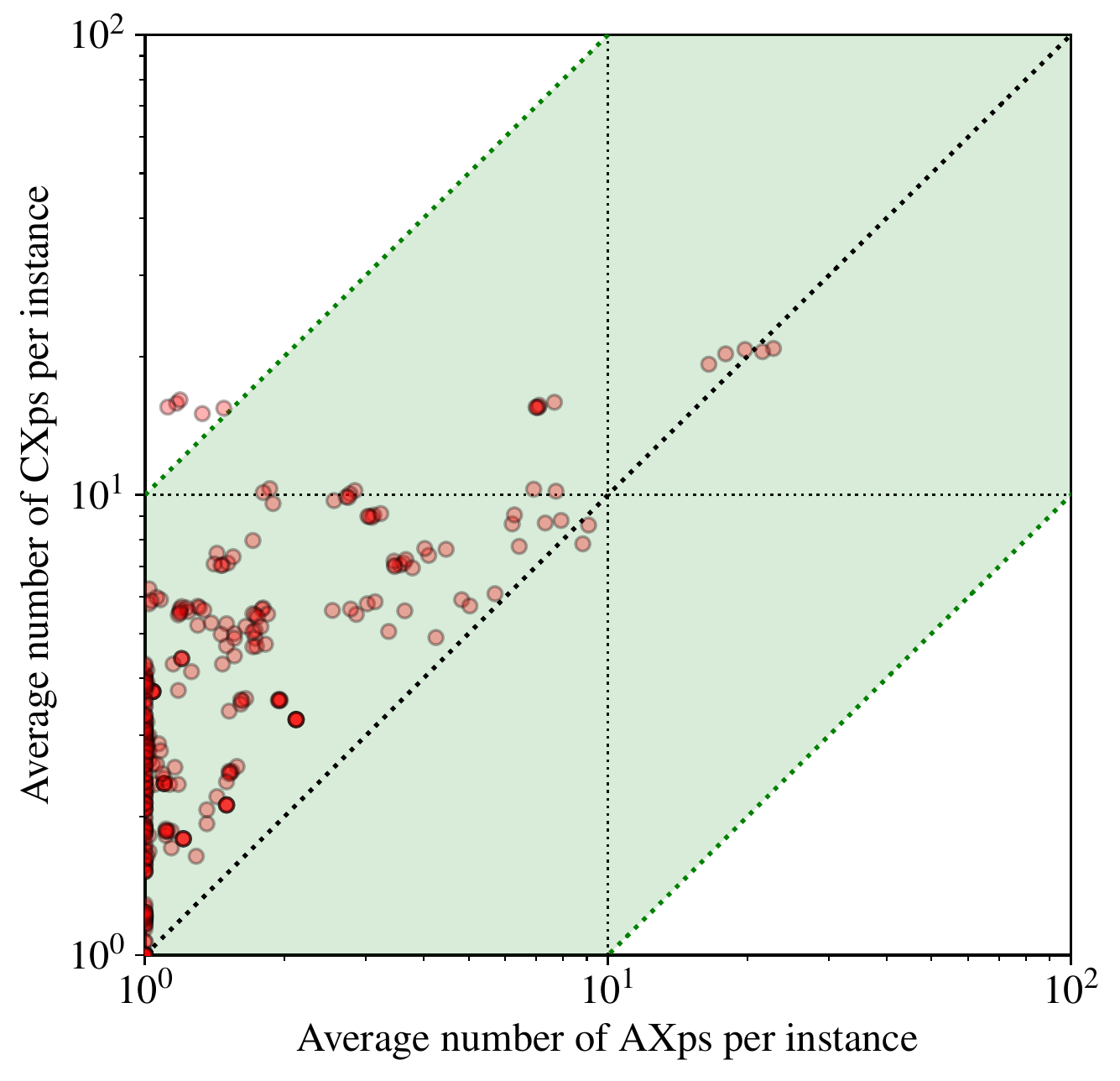}
    \caption{Average number of explanations per instance}
    \label{fig:scatter-enum}
  \end{subfigure}%
  \begin{subfigure}[b]{0.49\textwidth}
    \centering
    \includegraphics[width=\textwidth]{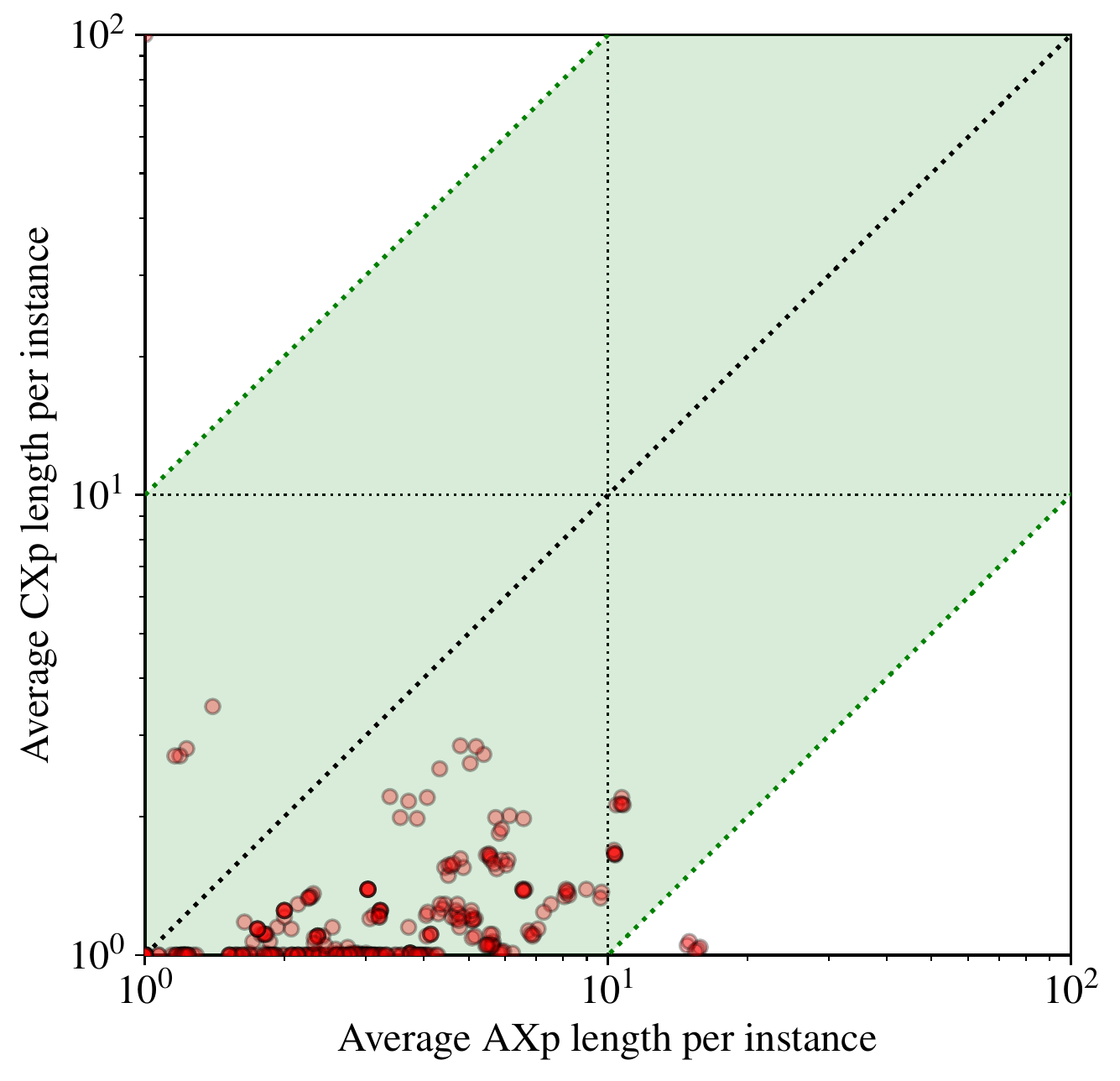}
    \caption{Average explanation size per instance}
    \label{fig:scatter-esize}
  \end{subfigure}
  \caption{Average number of AXps and CXps per data instance and their
  average size.}
  \label{fig:stats}
\end{figure*}

\paragraph{AXps vs CXps.}

As can be seen in~\autoref{fig:scatter-noftot}, the total number of
AXps per dataset tends to be lower than the total number of CXps.
Concretely, the number of AXps per dataset varies from 16 to 72838
while the number of CXps per dataset varies from 23 to 248825.
(Observe that the time to compute one explanation is negligible.)
These data are in line with the results previously obtained
in~\cite{inams-aiia20} when explaining a different kind of ML model
(namely, XGBoost models~\cite{guestrin-kdd16a}) with a different
reasoning engine (namely, Z3 SMT solver~\cite{bjorner-tacas08}).
Unsurprisingly, the average number of CXps per data instance is also
higher than the average number of AXps, as shown
in~\autoref{fig:scatter-enum}.
In general, the average number of CXps per instance varies from 1 to
20.8 while the average number of AXps goes from 1 to 22.7.
However and as one can observe in the scatter
plot~\autoref{fig:scatter-enum}, for the lion's share of data
instances there is a single AXp while there are many more CXps.
Note that the picture is the opposite for the average explanation
length (measured as the number of literals remaining in the
explanation).
In particular, CXps are shorter than AXps and the average length of a
CXp per data instance does not exceed 2.8 while the average length of
AXp varies from 1 to 15.8 (which in fact may provide another insight
into the underperforming MARCO-like CXp enumeration).
Observe that these data also confirms the results previously reported
in~\cite{inams-aiia20}.

\paragraph{Final Remarks.}

A few conclusions can be made with respect to the experimental results
shown above.
First, all the explainer configurations scale well and are able to
enumerate all explanations for all data instances incrementally, even
for DL models with thousands of rules and literals encoded into CNF
formulas with millions of clauses.
Second, MARCO-like AXp enumeration outperforms both LBX-like and
MARCO-like CXp enumeration.
Third, the number of CXps per dataset and per instance tends to be
higher than the number of AXps.
And finally, AXps are on average much larger than CXps.



\section{Conclusions} \label{sec:conc}

This paper investigates the computation of rigorous (or PI-)
explanations for DLs.
The paper first argues that, similar to DTs~\cite{iims-corr20}, DLs
may also not be interpretable. (This observation is also validated by
the experimental results.)
Furthermore, the paper proves that in contrast to the case of DTs,
finding one PI-explanation for DLs (and also for DSs) cannot be in
$\tn{P}$ unless $\tn{P}=\tn{NP}$.
As a result, one possible solution for finding AXps and CXps is to
encode the problem to propositional logic, and find one or enumerate
more than one explanation(s) using SAT oracles.
The experimental results demonstrate that SAT-based approaches are
effective at finding explanations (both AXps and CXps) of DLs. The
experimental results also confirm that a MARCO-like algorithm is
effective at enumerating explanations of DLs.

The results in this paper suggest a number of future research topics.
The application of SAT to explaining DLs motivates the investigation
of which other ML models can be explained with SAT solvers, and for
which explanations can be computed efficiently.
%


\clearpage
\bibliographystyle{splncs04}
\bibliography{refs}

\end{document}